\newtheorem{theorem}{Theorem}
\newtheorem{corollary}{Corollary}
\newtheorem{lemma}{Lemma}
\newtheorem{remark}{Remark}
\newtheorem{definition}{Definition}
\newtheorem{assumption}{Assumption}
\newtheorem{proposition}{Proposition}
\newtheorem{problem}{Problem}
\renewcommand{\t}{^{\mbox{\tiny\sf T}}}
\newcommand{\bremark}{\begin{remark}
\begin{rm}}
\newcommand{\eremark}{ \end{rm}\hfill \rule{1mm}{2mm}
\end{remark} }
\newcommand{\btheorem}{\begin{theorem} \begin{it}}
\newcommand{\etheorem}{\end{it} \hfill \rule{1mm}{2mm}
\end{theorem} }
\newcommand{\blemma}{\begin{lemma} \begin{it} }
\newcommand{\elemma}{ \end{it} \hfill\rule{1mm}{2mm}
\end{lemma} }
\newcommand{\bcorollary}{\begin{corollary} \begin{it} }
\newcommand{\ecorollary}{ \end{it} \hfill\rule{1mm}{2mm}
\end{corollary} }
\newcommand{\bdefinition}{\begin{definition} }
\newcommand{\edefinition}{ \hfill\rule{1mm}{2mm}
\end{definition} }
\newcommand{\bproposition}{\begin{proposition} }
\newcommand{\eproposition}{\hfill \rule{1mm}{2mm}
\end{proposition} }
\newcommand{\bexample}{\begin{example} \begin{rm}}
\newcommand{\eexample}{ \end{rm} \hfill\rule{1mm}{2mm}
\end{example} }
\newcommand{\bluff}{{\hbox{\raise 15pt \hbox{\hskip 0.5pt}}}}
\newcommand{\bassumption}{\begin{assumption} }
\newcommand{\eassumption}{\hfill \rule{1mm}{2mm}
\end{assumption} }
\newcommand{\balgorithm}{\medskip\begin{algorithm} \rm}
\newcommand{\ealgorithm}{ \hfill \rule{1mm}{2mm}\medskip
\end{algorithm} }
\newcommand{\basm}{\begin{assumption} \begin{rm} }
\newcommand{\easm}{ \end{rm} \hfill\rule{1mm}{2mm}
\end{assumption} }
\begin{document}

\title{ Equilibrium-Driven Smooth Separation and Navigation of Marsupial Robotic Systems}

\author{
\IEEEauthorblockN{Bin-Bin Hu, Bayu Jayawardhana, and Ming Cao
}
\thanks{
This work was supported the NXTGEN Hightech program
on Autonomous Factory (https://nxtgenhightech.nl/).
}
\thanks{
Bin-Bin Hu, Bayu Jayawardhana, and Ming Cao are with the Engineering and Technology Institute Groningen, Faculty of Science and Engineering, University of Groningen, 9747 AG Groningen, The Netherlands (e-mail: b.hu@rug.nl; b.jayawardhana@rug.nl, m.cao@rug.nl).
}
}

\maketitle

\begin{abstract}
In this paper, we propose an equilibrium-driven controller that enables a marsupial carrier-passenger robotic system to achieve smooth carrier-passenger separation and then to navigate the passenger robot toward a predetermined target point. Particularly, we design a potential gradient in the form of a cubic polynomial for the passenger's controller as a function of the carrier-passenger and carrier-target distances in the moving carrier's frame. This introduces multiple equilibrium points corresponding to the zero state of the error dynamic system during carrier-passenger separation. The change of equilibrium points is associated with the change in their attraction regions, enabling smooth carrier–passenger separation and afterwards seamless navigation toward the target. Finally, simulations demonstrate the effectiveness and adaptability of the proposed controller in environments containing obstacles.
\end{abstract}

\begin{IEEEkeywords}
Cooperative control, equilibrium-driven control, smooth separation and navigation, marsupial robotic systems 
\end{IEEEkeywords}

\IEEEpeerreviewmaketitle
\section{Introduction}
Over the years, the coordination of multi-robot systems (MRSs) has played an important role in a wide range of applications, including urban search and rescue, environmental monitoring, delivery, and collaborative convoying \cite{camisa2022multi,hu2023spontaneous,hu2024coordinated}. Among these various MRSs, the heterogeneous marsupial robotic systems, each comprising a large ``carrier" robot that transports a small ``passenger" robot, have garnered significant attention. Such a unique architecture can extend mission complexity and effectively provide broader spatial coverage by combining the strengths of carrier and passenger robots \cite{murphy1999marsupial,murphy2000marsupial}. For instance, in environments with hazardous targets, vertical shafts, and dense obstacles that cannot be reached by the carrier robot, such as an unmanned ground vehicle (UGV) or unmanned surface vehicle (USV), the addition of an unmanned aerial vehicle (UAV) acting as a passenger robot can overcome these challenges at a low cost.

In the initial stage, most of the efforts have been devoted to the architectural design of various marsupial systems \cite{janssen2007enabling}. For the USV-UAV marsupial systems, an autonomous surface-aerial robotic team was developed in \cite{pinto2014autonomous} for riverine environmental monitoring. Subsequently, an advanced platform capable of generating detailed maps and operating in remote locations was proposed in \cite{kalaitzakis2021marsupial} to support freshwater ecosystem studies. Additionally, some works have explored USV-UAV coordinated landing missions \cite{zhang2021visual}. A bio-inspired marsupial fish system, consisting of heterogeneous robotic fishes, was introduced in \cite{zhou2010marsupial}. For marsupial systems involving a UAV and a locomotion dog robot, a pipeline utilizing deep neural network-based vehicle-to-vehicle detection was developed in \cite{arora2023deep}. For UGV-UAV marsupial systems, the design of hardware, software architecture, and detailed experiments of a tethered UAV was presented in \cite{martinez2024design}, which significantly extends operational endurance.

Besides the specialized design of marsupial robotic platforms, another critical challenge lies in how to plan the deployment or separation of passenger robots \cite{moore2016nested}, the failure of which can prevent the passenger robot from contributing effectively to the mission. In this endeavor, early works \cite{las2015path,stankiewicz2018motion} relied on predefined Boolean conditions, such as determining whether a target point is accessible by the passenger robot. While relatively straightforward, these methods \cite{las2015path,stankiewicz2018motion} cannot handle complex scenarios. To address these limitations, a speed-management approach was proposed in \cite{mei2006deployment} to maximize the travel distance while accounting for energy and timing constraints. A temporal symbolic planning method was introduced in \cite{wurm2013coordinating} to assign targets for deploying or separating smaller robots effectively. For more complex long-term deployment tasks, a sequential stochastic assignment approach leveraging prior probability distributions was proposed in \cite{lee2021optimal}. Building on this work, a Monte-Carlo-Tree-Search framework was developed in \cite{lee2021stochastic}, enabling the carrier robot to optimally plan deployment or separation times and locations while reasoning over uncertain future observations and rewards. However, the aforementioned works \cite{moore2016nested,las2015path,stankiewicz2018motion,mei2006deployment,wurm2013coordinating,lee2021optimal,lee2021stochastic} have not considered sufficiently the safety when the passenger and carrier robots separate via event-based techniques. Specifically, since the carrier and passenger robots remain static before separation, a sudden large input change for the passenger robot at the start stage of deployment may cause the potential robot damage and deployment failure.


To this end, we propose an equilibrium-driven controller that enables a marsupial carrier-passenger system to achieve the smooth carrier-passenger separation and then to navigate the passenger robot to a predetermined target point. Particularly, we design a potential gradient in the form of a cubic polynomial for the passenger's controller as a function of the carrier-passenger and carrier-target distances in the moving carrier's frame, which introduces three equilibrium points in the carrier-passenger error dynamics. Each corresponding to one of the following three scenarios: the passenger robot is (i) staying on the carrier robot, (ii) positioned between the carrier robot and the target, and (iii) reaching the target. Initially, the passenger robot coincides with the carrier robot, keeping it stationary within the attraction region between equilibrium points (i) and (ii). By appropriately decreasing the carrier-target distance, the equilibrium point (ii) moves closer to and eventually coincides with the equilibrium point (i), thereby decreasing the attraction region of equilibrium point (i) to zero and increasing that of equilibrium point (iii) accordingly. This transition guides the passenger robot to leave the equilibrium point (i) and approach the equilibrium point (iii), which conveniently determines the passenger robot's stay on or smooth separation from the carrier robot. The main contribution is summarized as follows.

 To the best of our knowledge, it is the first time that an equilibrium-driven strategy is proposed for achieving smooth separation and navigation in heterogeneous marsupial robotic systems. Unlike traditional event-based separation techniques, the proposed equilibrium-driven strategy ensures that as the passenger robot transitions from one equilibrium to another, the closer it is to the equilibrium, the smaller the required control input. This advantage enables passenger's smooth separation from the carrier and convergence to the target, avoiding sudden changes/discontinuity in the applied control inputs. Furthermore, this strategy can be integrated with existing planning and control methods to enhance safety, demonstrating potential for various applications such as urban delivery and search and rescue.

The rest of the paper is organized below. Section~\ref{sec_preliminaries} introduces some 
preliminaries and formulates the problem. Section~\ref{sec_controller_design} details the controller for the carrier and passenger robots. Section~\ref{sec_convergence} presents the convergence analysis. Section~\ref{sec_pimulation} conducts 2D and 3D simulations to verify the algorithm's effectiveness. Finally, Section~\ref{sec_conclusion}
draw the conclusion.


\section{Preliminaries}
\label{sec_preliminaries}

We consider a marsupial robotic system $\mathcal V=\{ \nu_c, \nu_p\}$, which comprises a large carrier robot $\nu_c$ that transports a small passenger robot $\nu_p$ \cite{stankiewicz2018motion}. Here, the carrier and passenger robots are moving in the $n$-dimensional Euclidean space with the single integrator dynamics,
\begin{align}
\label{marsupial_dynamics}
\dot{\bold{x}}_\textrm{c}=\bold{u}_\textrm{c}, \dot{\bold{x}}_\textrm{p}=\bold{u}_\textrm{p}, 
\end{align}
where $\bold{x}_\textrm{c}:=[x_\textrm{c,1}, \cdots, x_\textrm{c,n}]\t\in\mathbb{R}^n, \bold{x}_\textrm{p}:=[x_\textrm{p,1}\in\mathbb{R}^n, \cdots, x_\textrm{p,n}]\t\in\mathbb{R}^n, \bold{u}_\textrm{c}:$ $=[u_\textrm{c,1}, \cdots, u_\textrm{c,n}]\t\in\mathbb{R}^n, \bold{u}_\textrm{p}:=[u_\textrm{p,1}, \cdots, u_\textrm{p,n}]\t\in\mathbb{R}^n$ represent the positions and inputs of the carrier and passenger robots, respectively. If it is more desirable to work with higher-order complicated dynamics such as UGVs, UAVs, and USVs\cite{aggarwal2020path,hu2024ordering}, one can treat the inputs $\bold{u}_\textrm{c}, \bold{u}_\textrm{p}$ as the upper-level desired velocities with an additional lower-level velocity tracking module.



We also consider a target point $\bold{x}_\textrm{t}:=[x_\textrm{t,1}, \cdots, x_\textrm{t,n}]\t\in\mathbb{R}^n$ to be fixed in time. In our problem setting, we emphasize that this target point in our problem setting is
hazardous and unable to reach by the carrier robot, which implies it can only be explored by the small passenger robot in \eqref{marsupial_dynamics}. 






\begin{definition}
\label{def_peparation_navigation}
 (Marsupial Separation \& Navigation) The carrier-passenger marsupial system $\mathcal V$ governed by \eqref{marsupial_dynamics} collectively achieves 
 the marsupial separation and navigation to the fixed target point if the following three properties are satisfied,
 
 \begin{itemize}
 
\item  {\bf P1 (Carrier-passenger separation):}  \label{P_1} The passenger robot starts on the carrier robot and then will achieve the carrier-passenger separation after a finite time $T\in\mathbb{R}^+$, i.e., $~\|\bold{x}_{c}(t)-\bold{x}_\textrm{p}(t)\|=0,~\forall t\in[0, T]$, and $\|\bold{x}_{c}(T^+)-\bold{x}_\textrm{p}(T^+)\|\neq0$, with $\bold{x}_{c}, \bold{x}_{p}$ given in \eqref{marsupial_dynamics}, and $T^+$ being right limit of the time $T$.

\item  {\bf P2 (Passenger-target navigation):}  \label{P_2}  The passenger robot  
asymptotically converges to the target point after the carrier-passenger separation, i.e., $\lim_{t\rightarrow\infty}\bold{x}_\textrm{p}(t)=\bold{x}_\textrm{t}$ with $\bold{x}_\textrm{t}$ being the position of the fixed target point.

\item  {\bf P3 (Carrier-target avoidance):}  \label{P_3} The carrier robot guarantees avoiding the target point all along, i.e., $\|\bold{x}_{c}(t)-\bold{x}_\textrm{t}(t)\|\neq 0, \forall t>0$.

\end{itemize}

\end{definition}




\begin{problem}
\label{problem_formulation}
Design the inputs $\{\bold{u}_\textrm{c}, \bold{u}_\textrm{p}\}$ for the carrier-passenger systems governed by \eqref{marsupial_dynamics} such that the marsupial separation \& navigation in Definition~\ref{def_peparation_navigation} is achieved.
\end{problem}


\begin{assumption}
\label{initial_carrier_target_position}
The initial relative position between the carrier robot and the target point is assumed to be sufficiently large, i.e., $\|\bold{x}_\textrm{c}(0)-\bold{x}_\textrm{t}(0)\|\geq \eta$ for some $\eta>0$. 
\end{assumption}


\begin{assumption}
\label{initial_carrier_passenger_position}
The initial positions of the carrier and passenger robots are the same, i.e., $\bold{x}_\textrm{p}(0)=\bold{x}_\textrm{c}(0)$.
\end{assumption}


\section{Controller Design and Analysis}
\label{sec_controller_design}

Let $\bold{e}_\textrm{p,c}$ be the position error between the passenger and carrier robots
\begin{align}
\label{err_passenger_carrier}
\bold{e}_\textrm{p,c}=\bold{x}_\textrm{p}-\bold{x}_{c},
\end{align}
where $\bold{x}_\textrm{p}, \bold{x}_{c}$ are given in \eqref{marsupial_dynamics}. Analogously, let $\bold{e}_\textrm{p,t}, \bold{e}_\textrm{t,c}$ be the position errors for the passenger- and carrier-target groups, respectively, and given by  
\begin{align}
\label{err_passenger_carrier_carrier}
\bold{e}_\textrm{p,t}=&\bold{x}_\textrm{p}-\bold{x}_\textrm{t},~\bold{e}_\textrm{t,c}=\bold{x}_\textrm{t}-\bold{x}_\textrm{c}.
\end{align}
Using 
\eqref{err_passenger_carrier}, \eqref{err_passenger_carrier_carrier} and Definition~\ref{def_peparation_navigation}, we propose 
the following controllers for 
the carrier and passenger robots 
\begin{equation}
\label{controller_carrier}
\bold{u}_\textrm{c}=\left\{
\begin{aligned}
k_\textrm{c}\bold{e}_\textrm{t,c}, & &\mathrm{if}~\|\bold{e}_\textrm{p,c}\|=0,\\
\bold{0}_\textrm{n}, & &\mathrm{if}~ \|\bold{e}_\textrm{p,c}\|\neq0,
\end{aligned}
\right.
\end{equation}
and
\begin{equation}
\label{controller_passengerr}
\bold{u}_\textrm{p}=\left\{
\begin{aligned}
-P(\bold{e}_\textrm{p,c}, \bold{e}_\textrm{t,c})\bold{e}_\textrm{p,c}+\bold{u}_\textrm{c}, & &\mathrm{if}~P(\bold{e}_\textrm{p,c}, \bold{e}_\textrm{t,c})\geq0,\\
P(\bold{e}_\textrm{p,c}, \bold{e}_\textrm{t,c})\bold{e}_\textrm{p,t}+\bold{u}_\textrm{c}, & &\mathrm{if}~ P(\bold{e}_\textrm{p,c}, \bold{e}_\textrm{t,c})<0,
\end{aligned}
\right.
\end{equation}
where the polynomial potential gradient below
\begin{align}
\label{Poteltion_field}
P(\bold{e}_\textrm{p,c}, \bold{e}_\textrm{t,c})=&k_\textrm{p}(\|\bold{e}_\textrm{p,c}\|-\|\bold{e}_\textrm{t,c}\|)(\|\bold{e}_\textrm{p,c}\|\nonumber\\
&+d)\left(\|\bold{e}_\textrm{p,c}\|-\frac{\|\bold{e}_\textrm{t,c}\|}{b}+c\right).
\end{align}
The parameters $b \in \mathbb{R}^+$, $c \in \mathbb{R}^+, d \in \mathbb{R}^+$ are chosen to satisfy
\begin{align}
\label{bc_condition}
b>1, bc< \eta,
\end{align}
where $\eta$ is the distance lower-bound  between the carrier robot and the target point as in Assumption 1.
Note that the controllers \eqref{controller_carrier} and \eqref{controller_passengerr} are one possible design choice motivated by Definition~\ref{def_peparation_navigation}, while alternative formulations are also possible. The parameters $k_\textrm{c}\in\mathbb{R}^+, k_\textrm{p}\in\mathbb{R}^+$ in \eqref{controller_carrier} and \eqref{Poteltion_field} are the control gains,  
$P(\bold{e}_\textrm{p,c}, \bold{e}_\textrm{t,c})$ can be regarded as a cubic polynomial with respect to $\|\bold{e}_\textrm{p,c}\|$, and $\|\bold{e}_\textrm{p,c}\|, \|\bold{e}_\textrm{t,c}\|$ represent the passenger–carrier and carrier–target distances, respectively.
The necessity of $P(\bold{e}_\textrm{p,c}, \bold{e}_\textrm{t,c})$ is to introduce multiple 
equilibrium points to the carrier-passenger error dynamic system, which can achieve the smooth carrier-passenger separation via the adjustment of different equilibrium points. The parameters $b, c, d$ in \eqref{Poteltion_field} allow adjustment of the zero points of $P(\bold{e}_\textrm{p,c}, \bold{e}_\textrm{t,c}) = 0$ and the function shape. Precisely, for $P(\bold{e}_\textrm{p,c}, \bold{e}_\textrm{t,c})=0$, the parameter $b>1$ in \eqref{bc_condition} ensures that the zero point $\|\bold{e}_\textrm{p,c}\|={\|\bold{e}_\textrm{t,c}\|}/{b}-c$ is always smaller than the zero point $\|\bold{e}_\textrm{p,c}\|=\|\bold{e}_\textrm{t,c}\|$ (i.e., ${\|\bold{e}_\textrm{t,c}\|}/{b}-c<\|\bold{e}_\textrm{t,c}\|$), while the parameter $c$ satisfying $bc< \eta$ in \eqref{bc_condition} further ensures that the initial value of the zero point $\|\bold{e}_\textrm{p,c}\|={\|\bold{e}_\textrm{t,c}(0)\|}/{b}-c$
 is strictly positive, i.e., ${\|\bold{e}_\textrm{t,c}(0)\|}/{b}-c> {\eta}/{b}-c>0$ under Assumption~\ref{initial_carrier_target_position}. As shown in Fig.~\ref{potential_field} (a), this effectively divides the initial carrier–target region $[0, \|\bold{e}_\textrm{t,c}(0)\|]$ into two intervals according to the sign of $P(\bold{e}_\textrm{p,c}, \bold{e}_\textrm{t,c}(0))$,  namely,
 $P(\bold{e}_\textrm{p,c}, \bold{e}_\textrm{t,c}(0))\geq0, \forall \|\bold{e}_\textrm{p,c}\|\in [0, {\|\bold{e}_\textrm{t,c}\|}/{b}-c]$, and $ P(\bold{e}_\textrm{p,c}, \bold{e}_\textrm{t,c}(0))<0, \forall\|\bold{e}_\textrm{p,c}\|\in ({\|\bold{e}_\textrm{t,c}\|}/{b}-c, \|\bold{e}_\textrm{t,c}(0)\|)$,
which defines the attraction regions for different equilibrium points. Finally, the parameter $d$ adjusts the shape of $P(\bold{e}_\textrm{p,c}, \bold{e}_\textrm{t,c})$, influencing the separation and navigation speed.

For the carrier's controller $\bold{u}_\textrm{c}$ in \eqref{controller_carrier}, taking the derivative of the carrier-target error $\bold{e}_\textrm{t,c}$ in \eqref{err_passenger_carrier_carrier} and substituting \eqref{marsupial_dynamics}, \eqref{controller_carrier} yields
\begin{equation}
\label{err_dynamic_carrier_target}
\dot{\bold{e}}_\textrm{t,c}=\left\{
\begin{aligned}
-k_\textrm{c}\bold{e}_\textrm{t,c}, & &\mathrm{if}~\|\bold{e}_\textrm{p,c}\|=0,\\
\bold{0}_\textrm{n}, & &\mathrm{if}~ \|\bold{e}_\textrm{p,c}\|\neq0,
\end{aligned}
\right.
\end{equation}
which implies that the carrier robot can move close to the fixed target point or stop according to the value of $\|\bold{e}_\textrm{p,c}\|$. Note that the carrier controller~\eqref{controller_carrier} can be replaced by other planning algorithms, such as guiding vector fields for following a desired elliptical path around the target \cite{hu2023spontaneous}. 

For the passenger's controller $\bold{u}_\textrm{p}$ in \eqref{controller_passengerr}, the additional input $\bold{u}_\textrm{c}$ in \eqref{controller_passengerr} is to change the passenger-carrier error $\bold{e}_\textrm{p,c}$ in \eqref{err_passenger_carrier} to the carrier's coordinate. Taking the derivative of $\bold{e}_\textrm{p,c}$ and substituting \eqref{marsupial_dynamics}, \eqref{controller_passengerr} yields
\begin{equation}
\label{err_dynamic_passengerr_carrier}
\dot{\bold{e}}_\textrm{p,c}=\left\{
\begin{aligned}
-P(\bold{e}_\textrm{p,c}, \bold{e}_\textrm{t,c})\bold{e}_\textrm{p,c}, & &\mathrm{if}~P(\bold{e}_\textrm{p,c}, \bold{e}_\textrm{t,c})\geq0,\\
P(\bold{e}_\textrm{p,c}, \bold{e}_\textrm{t,c})\bold{e}_\textrm{p,t}, & &\mathrm{if}~~P(\bold{e}_\textrm{p,c}, \bold{e}_\textrm{t,c})<0,
\end{aligned}
\right.
\end{equation}
which implies that the evolution of $\bold{e}_\textrm{p,c}$ also corresponds to two scenarios according to the value of $P(\bold{e}_\textrm{p,c}, \bold{e}_\textrm{t,c})$ in~\eqref{Poteltion_field}.  Although the passenger controller \eqref{controller_passengerr} switches at $P(\bold{e}_\textrm{p,c}, \bold{e}_\textrm{t,c})=0$, the relative carrier-passenger input $\bold{u}_\textrm{p}-\bold{u}_\textrm{c}$ for the carrier-passenger separation remains smooth because the left and right upper Dini derivatives at $P(\bold{e}_\textrm{p,c}, \bold{e}_\textrm{t,c})=0$ remains zero. See Lemma~\ref{lemma_passenger_caintaining} for more details.


\begin{figure}[!htb]
\centering
\includegraphics[width=7.5cm]{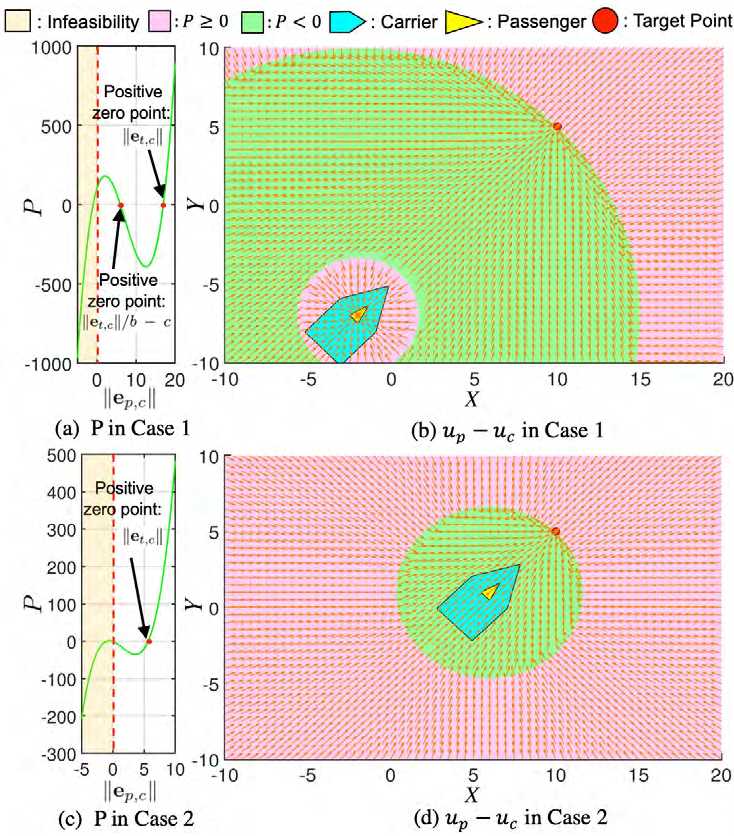}
\caption{Two cases of the carrier-passenger relative inputs $\bold{u}_\textrm{p} - \bold{u}_\textrm{c}$ in \eqref{controller_carrier} and \eqref{controller_passengerr}, under different potential gradient $P(\bold{e}_\textrm{p,c}, \bold{e}_\textrm{t,c})$ in \eqref{Poteltion_field}. Case 1: when the carrier robot is far away from the target point, the potential gradient $P(\bold{e}_\textrm{p,c}, \bold{e}_\textrm{t,c})$ in \eqref{Poteltion_field} contains two positive zero points in Fig.~\ref{potential_field} (a), and the passenger robot remains on the carrier robot under the relative input $\bold{u}_\textrm{p}-\bold{u}_\textrm{c}:=- P(\bold{e}_\textrm{p,c}, \bold{e}_\textrm{t,c})\bold{e}_\textrm{p,c}$ in Fig.~\ref{potential_field}~(b).  Case 2: when the carrier robot is close to the target point, the potential gradient $P(\bold{e}_\textrm{p,c}, \bold{e}_\textrm{t,c})$ in \eqref{Poteltion_field} contains one positive zero point in Fig.~\ref{potential_field} (c), and the passenger robot will separate from the carrier robot under the relative input $\bold{u}_\textrm{p}-\bold{u}_\textrm{c}:= P(\bold{e}_\textrm{p,c}, \bold{e}_\textrm{t,c})\bold{e}_\textrm{p,t}$ in Fig.~\ref{potential_field} (d). Here, the parameters $b, c, d$ in \eqref{Poteltion_field} are set to be $b=3, c=2, d=1$. }
\label{potential_field}
\end{figure}

To clarify the smooth carrier-passenger separation, we illustrate the relative input $\bold{u}_\textrm{p} - \bold{u}_\textrm{c}$ in Fig.~\ref{potential_field}.
As shown in Fig.~\ref{potential_field} (c), the carrier robot is getting close to the target point, and the carrier-target distance becomes $\|\bold{e}_\textrm{t,c}\| = bc$.
Therefore, the zero point of $P(\bold{e}_\textrm{p,c}, \bold{e}_\textrm{t,c})$ is given by $\|\bold{e}_\textrm{p,c}\| = {\|\bold{e}_\textrm{t,c}\|}/{b} - c =0$. 
Under this condition, one has that $P (\bold{e}_\textrm{p,c}, \bold{e}_\textrm{t,c})\rightarrow 0$ as $\|\bold{e}_\textrm{p,c}\| \rightarrow 0$ when $\|\bold{e}_\textrm{t,c}\| = bc$.
This results in small relative input $\bold{u}_\textrm{p} - \bold{u}_\textrm{c}$ in the neighborhood of zero.

\begin{remark}
\label{remark_1}
The reasons that the carrier robot stops after the carrier-passenger separation in \eqref{controller_carrier} are given below. (i)~When the carrier stops (i.e., $\|\bold{e}_\textrm{t,c}(t)\|$ is constant), $P(\bold{e}_\textrm{p,c}, \bold{e}_\textrm{t,c})$ in \eqref{Poteltion_field} depends solely on the relative carrier-passenger distance $\|\bold{e}_\textrm{p,c}\|$, which implies that the coordinate frame of $P(\bold{e}_\textrm{p,c}, \bold{e}_\textrm{t,c})$ can be dynamically established on the carrier's local frame. Such a dynamic frame facilitates smooth carrier-passenger separation and navigation by regulating the attraction regions of distinct equilibrium points. 
However, when implementing the algorithm in practice, an alternative approach for allowing the carrier robot to continue maneuvering is to record and fix $\|\bold{e}_\textrm{t,c}(t)\|=\bold{e}_\textrm{t,c}(T)$ for all $t>T$ at the moment of carrier-target separation because $\|\bold{e}_\textrm{t,c}\|$ becomes constant after the separation, thereby preserving the shape of the potential gradient $P(\bold{e}_\textrm{p,c}, \bold{e}_\textrm{t,c})$.
\end{remark}


\section{Convergence Analysis}
\label{sec_convergence}
In this section, we will present the carrier’s convergence toward the target before the passenger separation in Lemma~\ref{lemma_carrier}, the carrier–passenger separation of {\bf P1} in Lemma~\ref{lemma_passenger_caintaining}, the passenger–target navigation of {\bf P2} in Lemma~\ref{lemma_passenger_converging}, and the carrier–target avoidance of {\bf P3} in Lemma~\ref{lemma_overlapping_avoidance}. Finally, Theorem~\ref{theorem_smooth_seperation} summarizes our main result, i.e., the fulfillment of  {\bf P1}–{\bf P3}. 

\begin{lemma}
\label{lemma_carrier}
Under Assumptions~\ref{initial_carrier_target_position}-\ref{initial_carrier_passenger_position}, the carrier robot governed by \eqref{controller_carrier}  asymptotically approaches the fixed and hazardous target before the carrier-passenger separation, in which case the carrier robot stops, i.e., there exists a finite time $T>0$ such that (i) $\|\bold{e}_\textrm{t,c}(t)\|$ monotonically decreases for all $t\in[0,T]$, (ii) $\bold{e}_\textrm{t,c}(t)$ is constant for all $t>T$ with 
 $\bold{e}_\textrm{t,c}$ be given by \eqref{err_passenger_carrier_carrier}.
\end{lemma}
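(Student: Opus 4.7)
The plan is to first exploit Assumption~\ref{initial_carrier_passenger_position} to confine the motion to the pre-separation regime in which the carrier obeys a simple linear autonomous equation, and then to pin down the exact instant $T$ at which the co-located equilibrium loses its attracting character so that the passenger begins to leave.

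First, I would note that Assumption~\ref{initial_carrier_passenger_position} gives $\bold{e}_\textrm{p,c}(0)=\bold{0}$, which activates the first branch of $\bold{u}_\textrm{c}$ in~\eqref{controller_carrier}. As long as $\bold{e}_\textrm{p,c}(t)=\bold{0}$, the carrier error obeys $\dot{\bold{e}}_\textrm{t,c}=-k_\textrm{c}\bold{e}_\textrm{t,c}$ by~\eqref{err_dynamic_carrier_target}, yielding the closed form $\bold{e}_\textrm{t,c}(t)=e^{-k_\textrm{c}t}\bold{e}_\textrm{t,c}(0)$ and hence strictly monotone decay of $\|\bold{e}_\textrm{t,c}(t)\|$. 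To locate when $\bold{e}_\textrm{p,c}$ ceases to sit at zero, I would substitute $\bold{e}_\textrm{p,c}=\bold{0}$ into~\eqref{Poteltion_field} and obtain $P(\bold{0},\bold{e}_\textrm{t,c})=-k_\textrm{p}d\|\bold{e}_\textrm{t,c}\|(c-\|\bold{e}_\textrm{t,c}\|/b)$, whose sign coincides with that of $\|\bold{e}_\textrm{t,c}\|-bc$. Thus while $\|\bold{e}_\textrm{t,c}\|\geq bc$, the first branch of~\eqref{controller_passengerr} makes $\dot{\bold{e}}_\textrm{p,c}=-P\bold{e}_\textrm{p,c}$ vanish at $\bold{e}_\textrm{p,c}=\bold{0}$ and the passenger stays put; once $\|\bold{e}_\textrm{t,c}\|$ crosses $bc$ from above, the second branch activates and $\dot{\bold{e}}_\textrm{p,c}=P\bold{e}_\textrm{p,t}=-P\bold{e}_\textrm{t,c}\neq\bold{0}$ (using $\bold{e}_\textrm{p,t}=-\bold{e}_\textrm{t,c}$ at $\bold{e}_\textrm{p,c}=\bold{0}$), initiating separation.

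Solving $e^{-k_\textrm{c}T}\|\bold{e}_\textrm{t,c}(0)\|=bc$ would then give $T=(1/k_\textrm{c})\ln(\|\bold{e}_\textrm{t,c}(0)\|/bc)$, which is strictly positive and finite because Assumption~\ref{initial_carrier_target_position} and condition~\eqref{bc_condition} jointly force $\|\bold{e}_\textrm{t,c}(0)\|\geq\eta>bc>0$. Item~(i) is then immediate from the exponential decay on $[0,T]$, and item~(ii) follows because $\bold{e}_\textrm{p,c}(t)\neq\bold{0}$ for $t>T$ selects the lower branch of~\eqref{err_dynamic_carrier_target}, so $\dot{\bold{e}}_\textrm{t,c}=\bold{0}$ and $\bold{e}_\textrm{t,c}$ freezes at the value attained at $T$.

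The subtlest piece of bookkeeping will be at the switching instant itself: both $\bold{e}_\textrm{p,c}(T)=\bold{0}$ and $P(\bold{0},\bold{e}_\textrm{t,c}(T))=0$ hold simultaneously, so I expect to need a careful right-limit argument to confirm that the dynamics immediately enters the $P<0$ regime for $t>T$, produces a nonzero $\dot{\bold{e}}_\textrm{p,c}$, and consequently cuts off the carrier input. Establishing well-posedness of the piecewise dynamics and ruling out chattering around the switching surface $P=0$ is where I expect to spend most of the effort; monotonicity of $\|\bold{e}_\textrm{t,c}\|$ on $[0,T]$ and the factored structure of $P(\bold{0},\cdot)$ should make the crossing of $bc$ a single transverse event, which I would state explicitly.
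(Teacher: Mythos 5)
Your proposal is correct and follows essentially the same route as the paper's proof: establish that $\bold{e}_\textrm{p,c}\equiv\bold{0}$ is invariant while $\|\bold{e}_\textrm{t,c}\|\geq bc$ (via the sign of $P(\bold{0},\bold{e}_\textrm{t,c})$), show $\|\bold{e}_\textrm{t,c}\|$ decays under $\dot{\bold{e}}_\textrm{t,c}=-k_\textrm{c}\bold{e}_\textrm{t,c}$, and conclude that after separation the lower branch of \eqref{err_dynamic_carrier_target} freezes $\bold{e}_\textrm{t,c}$. The only difference is cosmetic but favorable: where the paper invokes the Lyapunov function $V_1$ and only asserts existence of $T$, you integrate the linear ODE explicitly and obtain $T=(1/k_\textrm{c})\ln\bigl(\|\bold{e}_\textrm{t,c}(0)\|/(bc)\bigr)$; the delicate behavior at the switching surface $P=0$ that you flag as needing a right-limit argument is likewise only treated informally in the paper.
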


\begin{proof}
We will prove the existence of the time $T$, and show that the scenarios (i)-(ii) in Lemma~\ref{lemma_carrier} hold for such a time $T$.
 

(i) For the scenario of $\|\bold{e}_\textrm{t,c}(t)\|$ monotonically decreasing for all $t\in[0,T]$, since $\bold{e}_\textrm{t,c}$ in \eqref{err_passenger_carrier_carrier} is interconnected to $\bold{e}_\textrm{p,c}$ by \eqref{controller_carrier}, we first analyze the relationship between the time-dependent $\bold{e}_\textrm{p,c}(t)$ and $\bold{e}_\textrm{t,c}(t)$.
For $\bold{e}_\textrm{p,c}=0$, it follows from \eqref{Poteltion_field} that $P(\bold{e}_\textrm{p,c}, \bold{e}_\textrm{t,c})=-k_\textrm{p}d\|\bold{e}_\textrm{t,c}\|(-{\|\bold{e}_\textrm{t,c}\|}/{b}+c)$ becomes a parabolic function of $\|\bold{e}_\textrm{t,c}\|$ with the following properties: $P(\bold{e}_\textrm{p,c}, \bold{e}_\textrm{t,c})\geq0$ and $P(\bold{e}_\textrm{p,c},\bold{e}_\textrm{t,c})$ decreases whenever $\|\bold{e}_\textrm{t,c}\|$ decreases and 
$\|\bold{e}_\textrm{t,c}\|\geq bc$. 
This implies that when $\bold{e}_\textrm{p,c}=0, \|\bold{e}_\textrm{t,c}\|\geq bc$, the passenger's input $\bold{u}_\textrm{p}$ in \eqref{controller_passengerr} becomes $\bold{u}_\textrm{p}=-k_\textrm{p}P(\bold{e}_\textrm{p,c}, \bold{e}_\textrm{t,c})\bold{e}_\textrm{p,c}+\bold{u}_\textrm{c}=\bold{u}_\textrm{c}$, which further implies that the carrier-passenger relative distance remains zero, i.e., 
\begin{align}
\label{carrier_relation_1}
\bold{e}_\textrm{p,c}(0)=0 \land \|\bold{e}_\textrm{t,c}(t)\|\geq bc \Rightarrow \bold{e}_\textrm{p,c}(t)=0.
\end{align}
On the other hand, when $\bold{e}_\textrm{p,c}=0$, it follows from \eqref{controller_carrier} that the carrier's input becomes $\bold{u}_\textrm{c}=k_\textrm{c}\bold{e}_\textrm{t,c}$. Let us define now the following Lyapunov function candidate 
\begin{align}
\label{V_carrier}
V_1=\frac{1}{2}\bold{e}_\textrm{t,c}\t\bold{e}_\textrm{t,c}
\end{align}
with $\bold{e}_\textrm{t,c}$ be given by \eqref{err_passenger_carrier_carrier}. By taking the derivative of \eqref{V_carrier} and using \eqref{err_dynamic_carrier_target}, we can obtain that $\dot{V}_1(\bold{e}_{t,c})=-k_\textrm{c}\|\bold{e}_\textrm{t,c}\|^2\leq0$ whenever $\|\bold{e}_\textrm{p,c}(t)\|=0$.
This implies that $\|\bold{e}_\textrm{t,c}(t)\|$ monotonically decreases when~$\|\bold{e}_\textrm{p,c}(t)\|=0$ and  $\|\bold{e}_\textrm{t,c}(t)\|\geq bc$. Thus in combination with \eqref{carrier_relation_1}, we have the following implication 
\begin{align*}
& \bold{e}_\textrm{p,c}(0)=0, \|\bold{e}_\textrm{t,c}(t)\|\geq bc \\ 
& \Rightarrow \|\bold{e}_\textrm{t,c}(t)\| \, \text{is monotonically decreasing.} 
\end{align*}
Accordingly, from $\|\bold{e}_\textrm{t,c}(0)\|>\eta>bc$ and $\|\bold{e}_\textrm{p,c}(0)\|=0$ in Assumptions~\ref{initial_carrier_target_position} and \ref{initial_carrier_passenger_position}, one has that there always exists a finite time $t=T$ such that
$\|\bold{e}_\textrm{t,c}(t)\|\geq bc$ and $\|\bold{e}_\textrm{t,c}(t)\|$ is monotonically decreasing for all $t\in[0, T]$. Then, the proof of the existence of $T$, and the validity of (i) are both completed.

(ii) We will now analyze the relationship between $\bold{e}_\textrm{t,c}(t)$ and $\bold{e}_\textrm{p,c}(t)$ in the scenario (ii) of 
$\bold{e}_\textrm{t,c}(t)$ being constant for all $t>T$. In this case, when $t=T^+$ with $T^+$ be the right limit of $T$, we have $\|\bold{e}_\textrm{t,c}(T^+)\|<bc$ and $\|\bold{e}_\textrm{p,c}(T^+)\|-\|\bold{e}_\textrm{t,c}(T^+)\|<0$, which implies that $P(\bold{e}_\textrm{p,c}(T^+), \bold{e}_\textrm{t,c}(T^+))<0$ in \eqref{Poteltion_field}. Accordingly, the relative input between carrier and passenger robots at $t=T^+$ satisfies $\bold{u}_\textrm{p}(T^+)-\bold{u}_\textrm{c}(T^+)=P(\bold{e}_\textrm{p,c}(T^+), \bold{e}_\textrm{t,c}(T^+))\bold{e}_\textrm{p,t}$, which shows that the passenger robot separates from the carrier robot and moves toward to the target. In other words, we have the following implication 
\begin{align}
\label{lemma1_condition_1_5}
\nonumber & \bold{e}_\textrm{t,c}(T^+)<bc \Rightarrow\|\bold{e}_\textrm{p,c}(T^{+})\|\neq0 \,\,\land \\ & \|\bold{e}_\textrm{p,c}(t)\| \, \text{is monotically increasing } 
\forall t\in(T, \infty).
\end{align}
Since $\|\bold{e}_\textrm{p,c}(t)\|\neq0, \forall t>T$ in \eqref{lemma1_condition_1_5}, it follows from \eqref{controller_carrier} that $\bold{u}_\textrm{c}(t)=\bold{0}_\textrm{n}, \forall t>T$,
which implies that 
\begin{align}
\label{lemma1_condition_2}
& \|\bold{e}_\textrm{t,c}(t)\|<bc \land \|\bold{e}_\textrm{t,c}(t)\|~\mathrm{constant} \, \,\forall t>T.
\end{align}
It is worth noting that the time derivative of $V_1$ is discontinuous at the switching surface when the state $\|\bold{e}_\textrm{p,c}\| = 0$ transitions to $\|\bold{e}_\textrm{p,c}\|\neq 0$ in \eqref{err_dynamic_carrier_target}, which indicates that $V_1$ is not continuously differentiable at $t = T$. We compute the right upper Dini derivative of $V_1$ in \eqref{V_carrier} at $t = T$,
\begin{align*}
D^+V_1(T)=\lim\sup_{h\rightarrow0^+}\frac{V_1(T+h)-V_1(T)}{h}=\lim\sup_{h\rightarrow0^+}\frac{0}{h}=0,
\end{align*}
because of $V_1(T+h)=V_1(T)$. It further implies that $V_1(T)$ remains fixed and the error system of $\bold{e}_\textrm{t,c}(t)$ is stable at $t=T$. Therefore, the carrier robot stops once the passenger robot is separated, i.e., the validity of the scenario (ii) is completed. This concludes the proof. 
\end{proof}


\begin{lemma}
\label{lemma_passenger_caintaining}
Under Assumptions~\ref{initial_carrier_target_position}-\ref{initial_carrier_passenger_position}, the passenger robot governed by~\eqref{controller_passengerr} remains on the carrier robot and achieves the carrier-passenger separation in a finite time, 
i.e., there exists $T>0$ such that $\|\bold{e}_\textrm{p,c}(t)\|=0$ for all $t\in[0, T]$ and $\|\bold{e}_\textrm{p,c}(T^{+})\|\neq0$. 
\end{lemma}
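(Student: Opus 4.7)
The plan is to bootstrap directly off Lemma~\ref{lemma_carrier}, which cleanly splits time at the threshold $T$: on $[0,T]$ the carrier-target distance satisfies $\|\bold{e}_\textrm{t,c}(t)\|\geq bc$ and is monotonically decreasing, while $\|\bold{e}_\textrm{t,c}(T^+)\|<bc$. The entire argument then hinges on how the cubic $P$ in \eqref{Poteltion_field} behaves on the manifold $\{\bold{e}_\textrm{p,c}=\bold{0}\}$ under these two regimes. Substituting $\bold{e}_\textrm{p,c}=\bold{0}$ collapses $P$ to the parabola
\begin{align*}
P(\bold{0},\bold{e}_\textrm{t,c})=k_\textrm{p}\,d\,\|\bold{e}_\textrm{t,c}\|\Bigl(\tfrac{\|\bold{e}_\textrm{t,c}\|}{b}-c\Bigr),
\end{align*}
whose sign is determined precisely by whether $\|\bold{e}_\textrm{t,c}\|$ exceeds $bc$. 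This is the bridge from Lemma~\ref{lemma_carrier} to the two conclusions required here.

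For the invariance claim $\|\bold{e}_\textrm{p,c}(t)\|=0$ on $[0,T]$, I would start from Assumption~\ref{initial_carrier_passenger_position} giving $\bold{e}_\textrm{p,c}(0)=\bold{0}$. On $[0,T]$ the parabola above is nonnegative, so whenever $\bold{e}_\textrm{p,c}=\bold{0}$ the first branch of \eqref{controller_passengerr} is active and reduces to $\bold{u}_\textrm{p}-\bold{u}_\textrm{c}=\bold{0}$, yielding $\dot{\bold{e}}_\textrm{p,c}=\bold{0}$. Hence $\{\bold{e}_\textrm{p,c}=\bold{0}\}$ is forward invariant on $[0,T]$ by uniqueness of ODE solutions; this is precisely the implication \eqref{carrier_relation_1} re-read in the present context.

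For the separation claim $\|\bold{e}_\textrm{p,c}(T^+)\|\neq 0$, Lemma~\ref{lemma_carrier}(ii) gives $\|\bold{e}_\textrm{t,c}(T^+)\|<bc$, so the parabola becomes strictly negative and the second branch of \eqref{controller_passengerr} activates. At $\bold{e}_\textrm{p,c}=\bold{0}$, $\bold{e}_\textrm{p,t}=-\bold{e}_\textrm{t,c}\neq\bold{0}$ (nonzero by Assumption~\ref{initial_carrier_target_position} together with Lemma~\ref{lemma_carrier}), so the relative velocity $\bold{u}_\textrm{p}-\bold{u}_\textrm{c}=P\,\bold{e}_\textrm{p,t}$ is strictly nonzero. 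This nudges $\|\bold{e}_\textrm{p,c}\|$ off zero and, by continuity of the trajectory together with the non-vanishing driving term just after $T$, forces $\|\bold{e}_\textrm{p,c}\|$ to be strictly positive at any $t$ infinitesimally larger than $T$.

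The main obstacle, and the reason this lemma merits its own statement rather than being a one-line corollary of Lemma~\ref{lemma_carrier}, is making the switching at $t=T$ rigorous: the right-hand side of the passenger ODE is discontinuous across $\{P=0\}$, so one must check that the two branches actually glue to a well-defined trajectory at $T$ and that the separation is not an artefact of the switching convention. I would mirror the left/right upper Dini-derivative computation from Lemma~\ref{lemma_carrier}: both branches of \eqref{controller_passengerr} are proportional to $P$ and evaluate to $\bold{0}$ on the switching surface, so $\bold{u}_\textrm{p}-\bold{u}_\textrm{c}$ is continuous across $T$. This simultaneously delivers the claim and substantiates the ``smooth separation'' narrative that motivated the cubic design of $P$ in the first place.
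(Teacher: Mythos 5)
Your proof is correct and follows the same skeleton as the paper's: both split time at the instant $T$ supplied by Lemma~\ref{lemma_carrier}, both reduce everything to the sign of $P$ evaluated at $\bold{e}_\textrm{p,c}=\bold{0}_\textrm{n}$ (your parabola formula agrees with the one computed in the proof of Lemma~\ref{lemma_carrier}), and both dispose of the switching surface $\{P=0\}$ by observing that the two branches of \eqref{controller_passengerr} vanish there. The one genuine difference is the invariance step on $[0,T]$: the paper introduces $V_2=\frac{1}{2}\bold{e}_\textrm{p,c}\t\bold{e}_\textrm{p,c}$ in \eqref{V2}, checks the signs of the factors of $\dot V_2$ in \eqref{d_V2}, and invokes LaSalle's invariance principle, whereas you note that $\bold{e}_\textrm{p,c}=\bold{0}_\textrm{n}$ is an equilibrium of the relative error dynamics whenever $\|\bold{e}_\textrm{t,c}\|\geq bc$ and conclude by uniqueness of solutions. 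Your route is more elementary and sidesteps a mild circularity in the Lyapunov version (certifying $\dot V_2\le 0$ on all of $[0,T]$ requires knowing that $\|\bold{e}_\textrm{p,c}(t)\|$ stays below $\|\bold{e}_\textrm{t,c}(t)\|/b-c$, which is essentially the conclusion being proved); the Lyapunov route, in exchange, is what one would need to claim attractivity of the carried configuration from small nonzero perturbations, which an equilibrium-plus-uniqueness argument does not deliver. The only point worth making explicit in your version is that uniqueness applies to the \emph{switched} closed loop: this holds because the combined field $-\max(P,0)\,\bold{e}_\textrm{p,c}+\min(P,0)\,\bold{e}_\textrm{p,t}$ is locally Lipschitz in the state, precisely because both branches are proportional to $P$ and agree (both equal $\bold{0}_\textrm{n}$) on $\{P=0\}$ --- the same observation you already use for continuity of $\bold{u}_\textrm{p}-\bold{u}_\textrm{c}$ across $t=T$.
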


\begin{proof}

From $P(\bold{e}_\textrm{p,c}(0), \bold{e}_\textrm{t,c}(0))>0$ in Lemma~\ref{lemma_carrier},
we analyze the carrier-passenger relative input $\bold{u}_\textrm{p}-\bold{u}_\textrm{c}$ in \eqref{controller_carrier} and \eqref{controller_passengerr} under $P(\bold{e}_\textrm{p,c}, \bold{e}_\textrm{t,c})\geq0$ as follows,
\begin{align}
\label{controller_passengerr_1}
\bold{u}_\textrm{p}-\bold{u}_\textrm{c}=&
-k_\textrm{p}(\|\bold{e}_\textrm{p,c}\|-\|\bold{e}_\textrm{t,c}\|)(\|\bold{e}_\textrm{p,c}\|+d)\left(\|\bold{e}_\textrm{p,c}\|\nonumber \bluff\right.\\
&\left.-\frac{\|\bold{e}_\textrm{t,c}\|}{b}+c\right)\bold{e}_\textrm{p,c}. 
\end{align}
By taking the derivative of the carrier-passenger error $\bold{e}_\textrm{p,c}$ in \eqref{marsupial_dynamics} and using \eqref{controller_passengerr_1}, we obtain 
\begin{align}
\label{derivative_pm_1}
\dot{\bold{e}}_\textrm{p,c}=&-k_\textrm{p}(\|\bold{e}_\textrm{p,c}\|-\|\bold{e}_\textrm{t,c}\|)(\|\bold{e}_\textrm{p,c}\|+d)\left(\|\bold{e}_\textrm{p,c}\|\nonumber\bluff\right.\\
&\left.-\frac{\|\bold{e}_\textrm{t,c}\|}{b}+c\right)\bold{e}_\textrm{p,c}.
\end{align}
Let us consider the following Lyapunov function candidate 
\begin{align}
\label{V2}
V_2=\frac{1}{2}\bold{e}_\textrm{p,c}\t\bold{e}_\textrm{p,c}.
\end{align}
Using \eqref{derivative_pm_1}, the time-derivative of $V_2$ satisfies
\begin{align}
\label{d_V2}
\dot{V}_2=&-k_\textrm{p}(\|\bold{e}_\textrm{p,c}\|-\|\bold{e}_\textrm{t,c}\|)(\|\bold{e}_\textrm{p,c}\|+d)\left(\|\bold{e}_\textrm{p,c}\|\nonumber\bluff\right.\\
&\left.-\frac{\|\bold{e}_\textrm{t,c}\|}{b}+c\right)\|\bold{e}_\textrm{p,c}\|^2,
\end{align}
which, together with $\|\bold{e}_\textrm{p,c}\|\geq0$,  gives that the invariance set of $\{\bold{e}_\textrm{p,c}\in\mathbb{R}^n~\big |~\dot{V}_2(\bold{e}_\textrm{p,c})=0\}$ in \eqref{d_V2} contains two manifolds and one point, i.e., 
\begin{align}
\label{V_3_equilibrium}
\{&\|\bold{e}_\textrm{p,c}\|=\|\bold{e}_\textrm{t,c}\|, \|\bold{e}_\textrm{p,c}\|=\frac{\|\bold{e}_\textrm{t,c}\|}{b}-c, \nonumber\\
&\bold{e}_\textrm{p,c}=\bold{0}_\textrm{n}~\big|~\dot{V}_2(\bold{e}_\textrm{p,c})=0 \}.
\end{align}
Recalling $\|\bold{e}_\textrm{p,c}(0)\|=0$ and $\|\bold{e}_\textrm{t,c}(0)\|>bc$ 
in Assumptions~\ref{initial_carrier_target_position}-\ref{initial_carrier_passenger_position}, one has that 
$V_2(0)=\bold{e}_\textrm{p,c}(0)\t\bold{e}_\textrm{p,c}(0)/2=0$ in \eqref{V2}. Hence the derivative $\dot{V}_2(0)=0$ in \eqref{d_V2} only contains one feasible point, i.e., $\{\bold{e}_\textrm{p,c}=\bold{0}_\textrm{n}~\big|~ \dot{V}_2(\bold{e}_\textrm{p,c}(0))=0\}$ and the non-positivity condition of $\dot{V}_2$ at $t = 0$ is guaranteed by the conditions of $\|\bold{e}_\textrm{p,c}(0)\|-\|\bold{e}_\textrm{t,c}(0)\|<0$ and $\|\bold{e}_\textrm{p,c}(0)\|-{\|\bold{e}_\textrm{t,c}(0)\|}/{b}+c<0$.
Combining with the fact that $\|\bold{e}_\textrm{t,c}(t)\|\geq bc$ for all $t\in[0, T]$ as shown before in the case~(i) in the proof of Lemma~\ref{lemma_carrier}, one has that $\dot{V}_2(t)\leq 0$ for all $t\in[0, T]$ when $\|\bold{e}_\textrm{p,c}(0)\|=0$; thus satisfying the non-positivity condition for $t\in[0, T]$. By using the LaSalle’s invariance principle~\cite{khalil2002nonlinear}, one has that $\|\bold{e}_\textrm{p,c}\|$ governed by \eqref{derivative_pm_1} satisfies
\begin{align}
\label{lemma2_condition}
\|\bold{e}_\textrm{p,c}(t)\|=0,~\forall t\in[0, T].
\end{align} 
Note that for all $t\in[0, T]$, the evolution of $\|\bold{e}_\textrm{t,c}(t)\|\geq bc$ 
in scenario (i) in the proof of Lemma~\ref{lemma_carrier} does not influence the evolution of $\bold{e}_\textrm{p,c}(t)$ 
because $\bold{e}_\textrm{p,c}$ already maintains at the  
equilibrium $\bold{e}_\textrm{p,c}(t)=0$. 
Meanwhile, at the switching point with $P(\bold{e}_\textrm{p,c}, \bold{e}_\textrm{t,c})=0$, it follows from \eqref{controller_passengerr} and \eqref{V2} that the left and right upper Dini derivatives become
$D^-V_2=\lim\sup_{P\rightarrow 0^{-}} P(\bold{e}_\textrm{p,c}, \bold{e}_\textrm{t,c})\bold{e}_\textrm{p,c}\t \bold{e}_\textrm{p,t}=0$ and $D^+V_2=\lim\sup_{P\rightarrow 0^{+}}-P(\bold{e}_\textrm{p,c}, \bold{e}_\textrm{t,c}\bold{e}_\textrm{p,c}\t \bold{e}_\textrm{p,c}=0$, respectively. 
It implies that $V_2$ is smooth at $P(\bold{e}_\textrm{p,c}, \bold{e}_\textrm{t,c})=0$ and the error system $\bold{e}_\textrm{p,c}$ is stable. Correspondingly, it follows from \eqref{lemma1_condition_1_5} that $\|\bold{e}_\textrm{p,c}(T^{+})\|\neq0$ is satisfied. 
\end{proof}


\begin{lemma}
\label{lemma_passenger_converging}
Under Lemma~\ref{lemma_passenger_caintaining}, the passenger robot governed by~\eqref{controller_passengerr}  
asymptotically converges to the fixed and hazardous target point after the carrier-passenger separation, i.e., $\lim_{t\rightarrow\infty}\bold{x}_\textrm{p}(t)=\bold{x}_\textrm{t}$. 
\end{lemma}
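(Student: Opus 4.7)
The plan is to leverage Lemmas~\ref{lemma_carrier} and~\ref{lemma_passenger_caintaining} to simplify the post-separation dynamics, and then to use a Lyapunov function on the passenger--target error with a LaSalle-type argument. After $t = T$, Lemma~\ref{lemma_carrier} gives $\bold{u}_\textrm{c} = \bold{0}_\textrm{n}$ and $\bold{e}_\textrm{t,c}$ constant with $\|\bold{e}_\textrm{t,c}\| < bc$, so that $P(\bold{e}_\textrm{p,c}(T^+), \bold{e}_\textrm{t,c}) < 0$ and the $P<0$ branch of \eqref{controller_passengerr} is activated. Combining this with the stationarity of the carrier, the relevant dynamics reduce to $\dot{\bold{e}}_\textrm{p,t} = \dot{\bold{x}}_\textrm{p} = P(\bold{e}_\textrm{p,c}, \bold{e}_\textrm{t,c})\, \bold{e}_\textrm{p,t}$ for all $t > T$.

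I would then pick $V_3 = \tfrac{1}{2}\, \bold{e}_\textrm{p,t}\t \bold{e}_\textrm{p,t}$, giving $\dot{V}_3 = P\, \|\bold{e}_\textrm{p,t}\|^2$. The central structural remark is that, because the reduced ODE is a scalar multiplication of $\bold{e}_\textrm{p,t}$, the direction of $\bold{e}_\textrm{p,t}$ is preserved along the flow, so the passenger traces a straight-line segment. Since $\|\bold{e}_\textrm{p,c}(T)\|=0$ forces $\bold{e}_\textrm{p,t}(T) = -\bold{e}_\textrm{t,c}$, this segment runs from the carrier's position towards the target. Along it, $\|\bold{e}_\textrm{p,c}\|$ takes values in $[0,\|\bold{e}_\textrm{t,c}\|]$, which keeps the factor $\|\bold{e}_\textrm{p,c}\|-\|\bold{e}_\textrm{t,c}\|$ non-positive while $\|\bold{e}_\textrm{p,c}\|-\|\bold{e}_\textrm{t,c}\|/b + c > 0$ by $\|\bold{e}_\textrm{t,c}\| < bc$; hence $P \leq 0$, with equality only at the target itself. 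Therefore $\dot{V}_3 < 0$ as long as $\bold{e}_\textrm{p,t} \neq \bold{0}_\textrm{n}$.

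To close the argument, I would invoke LaSalle's invariance principle: the set $\{\dot{V}_3 = 0\}$ decomposes into $\{\bold{e}_\textrm{p,t} = \bold{0}_\textrm{n}\}$ and the sphere $\{\|\bold{e}_\textrm{p,c}\| = \|\bold{e}_\textrm{t,c}\|\}$, but the latter meets the passenger's straight-line path only at $\bold{x}_\textrm{t}$ (since the sphere of radius $\|\bold{e}_\textrm{t,c}\|$ about $\bold{x}_\textrm{c}$ passes through $\bold{x}_\textrm{t}$), so the largest invariant subset collapses to $\{\bold{e}_\textrm{p,t} = \bold{0}_\textrm{n}\}$, giving $\lim_{t\to\infty}\bold{x}_\textrm{p}(t) = \bold{x}_\textrm{t}$. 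The main obstacle is excluding the escape of the trajectory into the region $\|\bold{e}_\textrm{p,c}\| > \|\bold{e}_\textrm{t,c}\|$, where $P$ would flip sign and activate the other branch of \eqref{controller_passengerr}. The direction-preservation observation resolves this, but it must be anchored at $t=T^+$: $P(T^+)<0$ together with $\bold{e}_\textrm{p,t}(T^+) = -\bold{e}_\textrm{t,c}$ guarantees that the initial motion is directed from $\bold{x}_\textrm{c}$ towards $\bold{x}_\textrm{t}$, so the passenger never crosses the zero-level sphere of $P$ prematurely and the $P<0$ branch remains active for all $t>T$.
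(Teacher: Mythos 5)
Your proposal is correct and follows the same overall skeleton as the paper's proof: the same Lyapunov function $V_3=\tfrac{1}{2}\|\bold{e}_\textrm{p,t}\|^2$, the same computation $\dot V_3 = P\,\|\bold{e}_\textrm{p,t}\|^2$, and a LaSalle argument that collapses the invariant set to $\{\bold{e}_\textrm{p,t}=\bold{0}_\textrm{n}\}$. Where you genuinely depart from the paper is in how you rule out escape into the region $\|\bold{e}_\textrm{p,c}\|>\|\bold{e}_\textrm{t,c}\|$ (where $P$ would change sign and re-activate the other branch of \eqref{controller_passengerr}). The paper handles this step by asserting that $\|\bold{e}_\textrm{p,t}\|$ decreases and $\|\bold{e}_\textrm{p,c}\|-\|\bold{e}_\textrm{t,c}\|$ increases while remaining non-positive, and separately excludes the manifold $\|\bold{e}_\textrm{p,c}\|=\|\bold{e}_\textrm{t,c}\|/b-c$ via \eqref{carrier_invariant_condition} --- an argument that is somewhat informal about why these monotonicity properties persist for all $t>T$. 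Your observation that the post-separation dynamics $\dot{\bold{e}}_\textrm{p,t}=P\,\bold{e}_\textrm{p,t}$ preserve the direction of $\bold{e}_\textrm{p,t}$ (the solution is $\bold{e}_\textrm{p,t}(t)=\exp\bigl(\int_T^t P\,ds\bigr)\bold{e}_\textrm{p,t}(T)$, a positive scalar multiple of $-\bold{e}_\textrm{t,c}$) reduces everything to a scalar problem on the carrier--target segment, on which the sign of each factor of $P$ is immediate and the second zero-level manifold is empty because $\|\bold{e}_\textrm{t,c}\|/b-c<0$. This buys a cleaner, essentially self-contained justification of the key invariance step; the paper's version is more directly phrased in terms of the invariant-set decomposition but leans on monotonicity claims it does not fully establish. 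One small point worth making explicit in your write-up: the factor $\exp(\int P)$ is positive, so $\lambda(t)=\|\bold{e}_\textrm{p,t}(t)\|/\|\bold{e}_\textrm{t,c}\|$ starts at $1$, is strictly decreasing while positive (since $P<0$ there), and cannot converge to any $\lambda^\ast>0$ because $P(\lambda^\ast)<0$; this pins down $\lambda\to 0$ without needing the full machinery of LaSalle.
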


\begin{proof} 
Recalling $\|\bold{e}_\textrm{p,c}(t)\|\neq 0$ for all $t>T$ and $P(\bold{e}_\textrm{p,c}(T^+)$, $\bold{e}_\textrm{t,c}(T^+))<0$ in Lemma~\ref{lemma_carrier}, one has that the control inputs $\bold{u}_\textrm{c}, \bold{u}_\textrm{p}$ in \eqref{controller_carrier}, \eqref{controller_passengerr} of the carrier and passenger robots satisfy
\begin{align}
\label{controller_carrier_passenger_lemme_convergence}
\bold{u}_\textrm{c}=\bold{0}_\textrm{n},~\bold{u}_\textrm{p}=&
k_\textrm{p}(\|\bold{e}_\textrm{p,c}\|-\|\bold{e}_\textrm{t,c}\|)(\|\bold{e}_\textrm{p,c}\|+d)\left(\|\bold{e}_\textrm{p,c}\|\nonumber\bluff\right.\\
&\left.-\frac{\|\bold{e}_\textrm{t,c}\|}{b}+c\right)\bold{e}_\textrm{p,t}+\bold{u}_\textrm{c}.
\end{align}
Since for all $t>T$, $\|\bold{e}_\textrm{t,c}(t)\|<bc$  is constant 
in \eqref{lemma1_condition_2}, one has that the evolution of $\|\bold{e}_\textrm{t,c}(t)\|$ does not influence the  evolution of $\|\bold{e}_\textrm{p,c}(t)\|$ for all $t\in(T,\infty)$. 
Let us define the following Lyapunov function candidate $V_3$ 
\begin{align}
\label{V3}
V_3=\frac{1}{2}(\bold{e}_\textrm{p,c}-\bold{e}_\textrm{t,c})\t (\bold{e}_\textrm{p,c}-\bold{e}_\textrm{t,c}).
\end{align}
Since $\bold{e}_\textrm{p,c}-\bold{e}_\textrm{t,c}=\bold{x}_\textrm{p}-\bold{x}_\textrm{c}-\bold{x}_\textrm{t}+\bold{x}_\textrm{c}=\bold{x}_\textrm{p}-\bold{x}_\textrm{t}=\bold{e}_\textrm{p,t}$, it follows from \eqref{marsupial_dynamics} and \eqref{controller_carrier_passenger_lemme_convergence} that the derivative of $V_3$ in \eqref{V3} is given by 
\begin{align}
\label{d_V3}
\dot{V}_3
               =&k_\textrm{p}(\|\bold{e}_\textrm{p,c}\|-\|\bold{e}_\textrm{t,c}\|)(\|\bold{e}_\textrm{p,c}\|+d)\left(\|\bold{e}_\textrm{p,c}\|\nonumber\bluff\right.\\
&\left.-\frac{\|\bold{e}_\textrm{t,c}\|}{b}+c\right)\|\bold{e}_\textrm{p,t}\|^2.
\end{align}
It follows from above that the invariance set of $\{\bold{e}_\textrm{p,c}\in\mathbb{R}^n~\big |~\dot{V}_3(\bold{e}_\textrm{p,c})=0\}$ in~\eqref{d_V3} also contains two manifolds and one point, i.e., 
\begin{align}
\label{V_3_equilibrium}
\{&\|\bold{e}_\textrm{p,c}\|=\|\bold{e}_\textrm{t,c}\|, \|\bold{e}_\textrm{p,c}\|={\|\bold{e}_\textrm{t,c}\|}/{b}-c, \nonumber\\
&\bold{e}_\textrm{p,c}=\bold{e}_\textrm{t,c}~\big|~ \dot{V}_3(\bold{e}_\textrm{p,c})=0 \},
\end{align}
with $\bold{e}_\textrm{p,t}:=\bold{e}_\textrm{p,c}-\bold{e}_\textrm{t,c}=\bold{0}_\textrm{n}$. At the beginning stage of the carrier-passenger separation (i.e., $t=T^+$), it follows from Lemma~\ref{lemma_passenger_caintaining} that $\|\bold{e}_\textrm{p,c}(T^+)\|\approx 0$, $\|\bold{e}_\textrm{t,c}(T^+)\|\approx bc$ and ${\|\bold{e}_\textrm{t,c}(T^+)\|}/{b}-c<0$, which implies that $\|\bold{e}_\textrm{p,t}(T^{+})\|>0$, $\|\bold{e}_\textrm{p,c}(T^+)\|-\|\bold{e}_\textrm{t,c}(T^+)\|<0$ and ${\|\bold{e}_\textrm{p,c}(T^+)\|-\|\bold{e}_\textrm{t,c}(T^+)\|}/{b}+c>0$.
It follows from \eqref{d_V3} that $\dot{V}_3(T^+)<0$, which by the definition of $V_3$ means that 
$\|\bold{e}_\textrm{p,t}\|$ decreases and $\|\bold{e}_\textrm{p,c}\|$ increases at $t=T^+$. 
In combination with $\bold{u}_\textrm{c}, \bold{u}_\textrm{p}$ in \eqref{controller_carrier_passenger_lemme_convergence}, one has that $\|\bold{e}_{p,t}\|$ decreases, $\|\bold{e}_\textrm{p,c}\|-\|\bold{e}_\textrm{t,c}\|<0$, and $\|\bold{e}_\textrm{p,c}\|-\|\bold{e}_\textrm{t,c}\|$ increases whenever 
$\|\bold{e}_\textrm{p,c}(T^+)\|-\|\bold{e}_\textrm{t,c}(T^+)\|<0$. Hence, the manifold of $\|\bold{e}_\textrm{p,c}\|=\|\bold{e}_\textrm{t,c}\|$ in \eqref{V_3_equilibrium} is only satisfied when $\bold{e}_\textrm{p,c}=\bold{e}_\textrm{t,c}$, and $\|\bold{e}_\textrm{p,c}(t)\|-\|\bold{e}_\textrm{t,c}(t)\|\leq0$ for all $t>T$ whenever 
$\|\bold{e}_\textrm{p,c}(T^+)\|-\|\bold{e}_\textrm{t,c}(T^+)\|<0$. Meanwhile, from the fact that
\begin{align}
\label{carrier_invariant_condition}
&(\|\bold{e}_\textrm{p,c}(t)\|+c)b>\|\bold{e}_\textrm{t,c}(t)\|>0,  \forall t>T,
\end{align}
one has that the invariant manifold $\|\bold{e}_\textrm{p,c}\|={\|\bold{e}_\textrm{t,c}\|}/{b}-c$ in \eqref{V_3_equilibrium} are excluded. 
Accordingly, the invariance set of $\{\bold{e}_\textrm{p,c}\in\mathbb{R}^n~\big |~\dot{V}_3(\bold{e}_\textrm{p,c}(t))=0, \forall t>T\}$ in~\eqref{d_V3} essentially contains one point $\{\bold{e}_\textrm{p,c}(t)=\bold{e}_\textrm{t,c}(t)~(\mbox{i.e.}, \bold{e}_\textrm{p,t}(t)=\bold{0}_\textrm{n}), \forall t>T\}$. Recalling $\|\bold{e}_\textrm{p,c}(t)\|-\|\bold{e}_\textrm{t,c}(t)\|\leq0$ for all $t>T$ and \eqref{carrier_invariant_condition}, one has that the non-positivity condition of $\dot{V}_3(t)\leq0$ for all $t>T$ is guaranteed as well. Analogous to the proof in Lemma \ref{lemma_passenger_caintaining}, one has that $\lim_{t\rightarrow\infty}\bold{e}_\textrm{p,t}(t)=\bold{0}_\textrm{n}$. This concludes the proof.
\end{proof}

\begin{lemma}
\label{lemma_overlapping_avoidance}
The carrier robot governed by \eqref{controller_carrier} also achieves avoidance with the fixed and hazardous target all along, i.e., $\|\bold{e}_\textrm{t,c}(t)\|\neq 0, \forall t>0$ in {\bf P3} of Definition~\ref{def_peparation_navigation}.  
\end{lemma}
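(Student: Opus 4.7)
The plan is to split the time axis at the separation instant $T$ provided by Lemma~\ref{lemma_carrier} and argue that $\|\bold{e}_\textrm{t,c}(t)\|$ remains strictly positive on each piece by exploiting, respectively, the exponential-decay form of the carrier dynamics before $T$ and the stopping behavior after $T$.

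For $t\in[0,T]$, by Lemma~\ref{lemma_passenger_caintaining} we have $\|\bold{e}_\textrm{p,c}(t)\|=0$, so the first branch of \eqref{controller_carrier} is active and \eqref{err_dynamic_carrier_target} reduces to the linear ODE $\dot{\bold{e}}_\textrm{t,c}=-k_\textrm{c}\bold{e}_\textrm{t,c}$. Its closed-form solution is $\bold{e}_\textrm{t,c}(t)=e^{-k_\textrm{c}t}\bold{e}_\textrm{t,c}(0)$, hence $\|\bold{e}_\textrm{t,c}(t)\|=e^{-k_\textrm{c}t}\|\bold{e}_\textrm{t,c}(0)\|$. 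Using Assumption~\ref{initial_carrier_target_position} together with $\eta>bc$ in \eqref{bc_condition}, the initial distance $\|\bold{e}_\textrm{t,c}(0)\|$ is strictly positive, so $\|\bold{e}_\textrm{t,c}(t)\|>0$ on $[0,T]$; moreover the monotone-decrease argument in Lemma~\ref{lemma_carrier} shows that the separation trigger $\|\bold{e}_\textrm{t,c}\|\downarrow bc$ is reached first, giving the sharper lower bound $\|\bold{e}_\textrm{t,c}(t)\|\geq bc>0$ for all $t\in[0,T]$.

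For $t>T$, Lemma~\ref{lemma_passenger_caintaining} yields $\|\bold{e}_\textrm{p,c}(t)\|\neq0$, which selects the second branch of \eqref{controller_carrier}, namely $\bold{u}_\textrm{c}=\bold{0}_\textrm{n}$; equivalently, by scenario~(ii) of Lemma~\ref{lemma_carrier} and \eqref{lemma1_condition_2}, $\bold{e}_\textrm{t,c}(t)$ is constant. Therefore $\|\bold{e}_\textrm{t,c}(t)\|=\|\bold{e}_\textrm{t,c}(T)\|\geq bc>0$ for every $t>T$. Combining the two regimes gives $\|\bold{e}_\textrm{t,c}(t)\|\geq bc>0$ for all $t>0$, which is property \textbf{P3}.

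There is no genuine technical obstacle here; the statement is essentially a corollary of Lemma~\ref{lemma_carrier} together with the structural property $bc<\eta$ in \eqref{bc_condition}. The only subtlety worth being careful about is the boundary instant $t=T$: one must verify that the separation is triggered exactly at (or strictly before reaching) $\|\bold{e}_\textrm{t,c}\|=bc$ rather than at $\|\bold{e}_\textrm{t,c}\|=0$, which is precisely the role of the sign analysis of $P(\bold{e}_\textrm{p,c},\bold{e}_\textrm{t,c})$ on the slice $\bold{e}_\textrm{p,c}=0$ carried out in the proof of Lemma~\ref{lemma_carrier}. Once that is cited, the positivity of $bc$ directly yields the conclusion.
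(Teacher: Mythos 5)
Your proof is correct and follows essentially the same route as the paper: split the timeline at the separation instant $T$, invoke scenario (i) of Lemma~\ref{lemma_carrier} to get $\|\bold{e}_\textrm{t,c}(t)\|\geq bc>0$ on $[0,T]$, and use the constancy of $\bold{e}_\textrm{t,c}$ from \eqref{lemma1_condition_2} for $t>T$. The only (immaterial) difference is that you assert $\|\bold{e}_\textrm{t,c}(t)\|\geq bc$ after $T$ while the paper's \eqref{lemma1_condition_2} states $\|\bold{e}_\textrm{t,c}(t)\|<bc$ there; by continuity the frozen value is exactly $bc$, and either way strict positivity, which is all \textbf{P3} requires, holds.
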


\begin{proof}
It follows from the scenario (i) in Lemma~\ref{lemma_carrier} 
that $\|\bold{e}_\textrm{t,c}(t)\|\geq bc, \forall t\in[0, T].$
Combining it with $\|\bold{e}_\textrm{t,c}(t)\|>0,~\forall t>T$ in \eqref{lemma1_condition_2}, the proof is completed.
\end{proof}

\begin{theorem}
\label{theorem_smooth_seperation}
Under Assumptions~\ref{initial_carrier_target_position}-\ref{initial_carrier_passenger_position}, the marsupial robotic system $\mathcal V$ governed by \eqref{marsupial_dynamics}, \eqref{controller_carrier}, \eqref{controller_passengerr} 
achieves marsupial separation \& navigation to the fixed target point. 
\end{theorem}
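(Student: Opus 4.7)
The proof is essentially a synthesis step: since the three properties \textbf{P1}--\textbf{P3} of Definition~\ref{def_peparation_navigation} are exactly what Lemmas~\ref{lemma_passenger_caintaining}, \ref{lemma_passenger_converging}, and \ref{lemma_overlapping_avoidance} establish individually, the plan is to chain these four lemmas together and verify that each one's hypotheses are discharged by what has been proved earlier, so that no circularity arises. I would structure the argument as a short sequential combination: first, invoke Lemma~\ref{lemma_carrier} to guarantee the existence of a finite time $T>0$ at which the carrier-target error $\|\bold{e}_\textrm{t,c}(t)\|$ has monotonically decreased to the threshold $bc$, and stays constant thereafter; second, invoke Lemma~\ref{lemma_passenger_caintaining} to conclude that on $[0,T]$ the passenger remains attached to the carrier and that $\|\bold{e}_\textrm{p,c}(T^+)\|\neq 0$, which is exactly \textbf{P1}; third, invoke Lemma~\ref{lemma_passenger_converging}, whose hypotheses are precisely the conclusions of Lemma~\ref{lemma_passenger_caintaining}, to obtain $\lim_{t\rightarrow\infty}\bold{x}_\textrm{p}(t)=\bold{x}_\textrm{t}$, which is \textbf{P2}; and finally, invoke Lemma~\ref{lemma_overlapping_avoidance} to obtain $\|\bold{e}_\textrm{t,c}(t)\|\neq 0$ for all $t>0$, which is \textbf{P3}.

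Before writing the chain, I would briefly check consistency of the initial data: Assumption~\ref{initial_carrier_passenger_position} gives $\bold{e}_\textrm{p,c}(0)=\bold{0}_\textrm{n}$, and Assumption~\ref{initial_carrier_target_position} combined with the parameter condition $bc<\eta$ in \eqref{bc_condition} yields $\|\bold{e}_\textrm{t,c}(0)\|\geq\eta>bc$, which is exactly the initial configuration used by all four lemmas. This ensures that the ``stay phase'' $t\in[0,T]$ and the ``separation and navigation phase'' $t\in(T,\infty)$ are well-defined and non-overlapping.

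I do not anticipate a substantial technical obstacle, since all the analytic work (Lyapunov arguments, Dini-derivative computations at the switching surface $P(\bold{e}_\textrm{p,c},\bold{e}_\textrm{t,c})=0$, LaSalle-type invariance arguments, and exclusion of the spurious equilibrium manifold $\|\bold{e}_\textrm{p,c}\|=\|\bold{e}_\textrm{t,c}\|/b-c$) has already been carried out inside the lemmas. The only point I would be careful about is ensuring that the time instant $T$ appearing in Lemma~\ref{lemma_carrier}, Lemma~\ref{lemma_passenger_caintaining}, and Lemma~\ref{lemma_overlapping_avoidance} is the \emph{same} $T$, i.e., that the carrier stopping, the passenger detaching, and the switch of $P(\bold{e}_\textrm{p,c},\bold{e}_\textrm{t,c})$ from nonnegative to negative all coincide; this is already implicit in the proofs, because each lemma defines $T$ as the first time at which $\|\bold{e}_\textrm{t,c}(t)\|=bc$. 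Making this coincidence explicit in one sentence and then concluding that \textbf{P1}--\textbf{P3} jointly hold finishes the proof.
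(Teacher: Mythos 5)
Your proposal matches the paper's proof, which simply states that the theorem follows from applying Lemmas~\ref{lemma_carrier}--\ref{lemma_overlapping_avoidance}; your additional checks on the initial data and the coincidence of the time $T$ across lemmas are a more careful rendering of the same argument.
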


The proof follows  
from the application of Lemmas \ref{lemma_carrier}-\ref{lemma_overlapping_avoidance}. 

\begin{figure}[!htb]
\centering
\includegraphics[width=6.0cm]{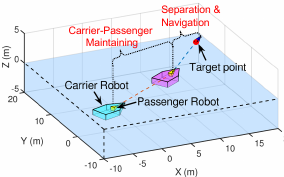}
\caption{Trajectories of carrier-passenger separation and navigation in 3D. Note that the carrier and passenger robots are modeled in 3D space. However, the motion of the carrier robot is constrained such that its $z$-coordinate remains zero. To address the issue, $\bold{u}_\textrm{c}$ in \eqref{controller_carrier} is first designed in 3D, and then projected it onto the 2D plane (i.e., by setting $u_{c,3} = 0$ in \eqref{marsupial_dynamics}) to reflect the planar behavior. Additionally, we assume that the target’s $z$-axis position satisfies $|x_{t,3}| \leq bc$, which guarantees $\|\bold{e}_\textrm{t,c}(T)\| = bc$ for some time $T$.} 
\label{3D_trajectory}
\end{figure}
\begin{figure}[!htb]
\centering
\includegraphics[width=7.0cm]{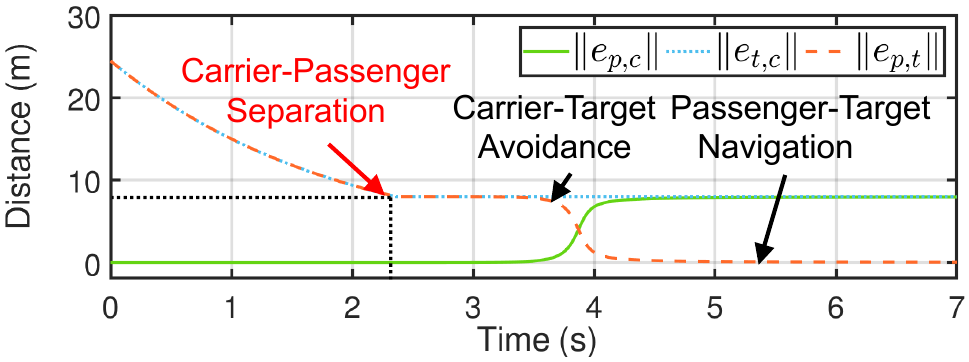}
\caption{Temporal evolution of the carrier-passenger, the carrier-target, and the passenger-target distances $\|\bold{e}_\textrm{p,c}\|$, $\|\bold{e}_\textrm{t,c}\|$, $\|\bold{e}_\textrm{p,t}\|$, respectively in Fig.~\ref{3D_trajectory}.  }
\label{3D_ptate}
\end{figure}

\section{Simulations}
\label{sec_pimulation}
In this section, the control gains for $\bold{u}_\textrm{c}, \bold{u}_\textrm{p}$ in \eqref{controller_carrier} and \eqref{controller_passengerr} are set to be $k_\textrm{c} = 0.5$ and $k_\textrm{p} = 1$, respectively. The parameters of $P(\bold{e}_\textrm{p,c}, \bold{e}_\textrm{t,c})$ in \eqref{Poteltion_field} are chosen to be $b = 8$, $c = 1$, and $d = 1$. 
\begin{figure}[!htb]
\centering
\includegraphics[width=6.2cm]{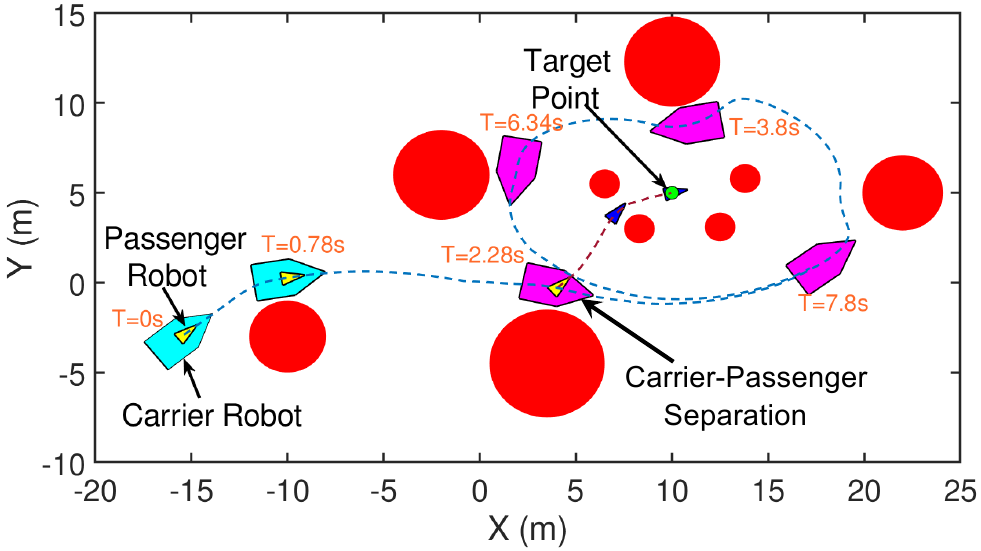}
\caption{Trajectories of the carrier-passenger marsupial robots from the initial positions (i.e., the blue vessel and small yellow triangle represent the carrier and passenger robots, respectively) to the carrier-passenger separation and navigation in obstacle environments (i.e., the magenta vessel and small dark blue triangle represent the carrier and passenger robots). All the symbols have a similar meaning in Fig.~\ref{3D_trajectory}. Note that the carrier does not
stop at the moment of the separation as discussed in Remark~\ref{remark_1}.}
\label{2D_trajectory}
\end{figure}
\begin{figure}[!htb]
\centering
\includegraphics[width=6.5cm]{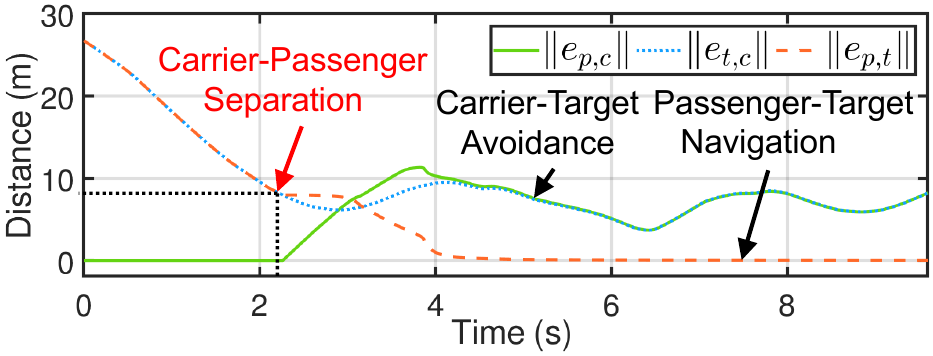}
\caption{Temporal evolution of the carrier-passenger, the carrier-target, and the passenger-target distances $\|\bold{e}_\textrm{p,c}\|$, $\|\bold{e}_\textrm{t,c}\|$, $\|\bold{e}_\textrm{p,t}\|$, respectively in Fig.~\ref{2D_trajectory}.  }
\label{2D_ptate}
\end{figure}
The initial positions of the carrier, passenger robots, and target point are set to satisfy Assumptions~\ref{initial_carrier_target_position}-\ref{initial_carrier_passenger_position}. 
Fig.~\ref{3D_trajectory} illustrates the trajectories from the initial positions to the final carrier-passenger separation and passenger-target navigation in 3D. 
Fig.~\ref{3D_ptate} presents the state evolution of $\|\bold{e}_\textrm{p,c}(t)\| = 0, \forall t \in [0, 2.4]$, and $\|\bold{e}_\textrm{p,c}(t)\| > 0, t > 2.4$, which confirms {\bf P1} of Definition~\ref{def_peparation_navigation}. 
As $\lim_{t \to \infty} \|\bold{e}_\textrm{p,t}\| = 0$, it indicates that {\bf P2} of Definition~\ref{def_peparation_navigation} is achieved. For the carrier-target distance, one has $\|\bold{e}_\textrm{t,c}(t)\|=8>0, \forall t > 2.4\ \text{s}$, verifying {\bf P3} of Definition~\ref{def_peparation_navigation}.
These results demonstrate the effectiveness of the proposed controllers \eqref{controller_carrier} and \eqref{controller_passengerr} in 3D. 
Additionally, for the reactive navigation in complex obstacle environments, the proposed controllers $\bold{u}_\textrm{c}, \bold{u}_\textrm{p}$ in \eqref{controller_carrier} and \eqref{controller_passengerr}, can conveniently be adapted by seamlessly combining them with quadratic programming control barrier functions (QP-CBF) \cite{ames2016control}. In Figs. \ref{2D_trajectory}-\ref{2D_ptate}, we show a simulation result where the carrier robot keeps moving and where our proposed controllers are integrated with QP-CBF method to avoid obstacles. The effectiveness of the proposed algorithm in obstacle environments is validated as well.




\section{Conclusion}
\label{sec_conclusion}
We have presented an equilibrium-driven controller for the marsupial robotic system to achieve smooth carrier-passenger separation and navigation. Future works are underway on combining the proposed methods with other multi-agent control laws to achieve complex marsupial robotic tasks. 

\appendices

\bibliographystyle{IEEEtran}
\bibliography{IEEEabrv,ref}

\end{document}